
\documentclass[letterpaper, 10 pt, conference]{ieeeconf}  

\IEEEoverridecommandlockouts                              
\pdfminorversion=4
\overrideIEEEmargins
\usepackage{booktabs}
\usepackage{algorithm}
\usepackage[noend]{algpseudocode}
\usepackage{array}
\usepackage{graphicx}
\usepackage{epstopdf}
\usepackage{cite}
\usepackage{amsfonts,amssymb}  
\usepackage{amsmath}
\usepackage{color}
\usepackage[export]{adjustbox}

\newtheorem{thm}{Theorem}[section]
\newtheorem{lm}[thm]{\bf Lemma}
\newtheorem{prob}[thm]{\bf Problem}
\newtheorem{df}[thm]{\bf Definition}
\newtheorem{pr}[thm]{\bf Proposition}
\newtheorem{rem}[thm]{\bf Remark}

\newtheorem{asms}[thm]{\bf Assumptions}
\newtheorem{asm}[thm]{\bf Assumption}

\title{\LARGE \bf
Event-Triggered Controller Synthesis for Dynamical Systems \\ with Temporal Logic Constraints
}


\author{Dipankar Maity and John S. Baras
\thanks{The authors are with the Department of Electrical and Computer
        Engineering and The Institute for Systems Research,
        University of Maryland, College Park, USA. Email:
        {\tt\small dmaity@umd.edu, baras@umd.edu}}%
}

\begin{document}

\maketitle
\thispagestyle{empty}
\pagestyle{empty}

\noindent
\begin{abstract}
In this work, we propose an event-triggered control framework for dynamical systems with temporal logical constraints.
Event-triggered control methodologies have proven to be very efficient in reducing sensing, communication and  computation
costs. When a continuous feedback control is replaced with an event-triggered strategy, the corresponding state trajectories 
also differ. In a system with logical constraints, such small deviation in the trajectory might lead to  unsatisfiability of the
logical constraints. In this work, we develop an approach where we ensure that the event-triggered state trajectory is confined within
an $\epsilon$ tube of the ideal trajectory associated with the continuous state feedback. At the same time, we will ensure satisfiability of the
logical constraints as well. 
Furthermore, we show that the proposed method works for delayed systems as long as the delay is bounded by a certain quantity.
\end{abstract}

\section{Introduction}
Present control systems are typically a large network of heterogeneous components sharing some common resources and information, and with coordinated cooperation, they aim to achieve desired performance.  These kinds of  highly complex systems are ubiquitous in cyber-physical-systems (CPS), and also referred as networked-CPS. In many CPS, the controller synthesis is subjected to many logical constraints that arise due to presence of logical variables and reasoning among the subsystems. 
Recent studies on controller synthesis with linear temporal logic (LTL) have paved a way to design controllers for large complex systems with safety, synchronisation, and other logical constraints \cite{bemporad1999control,antoniotti1995discrete,kloetzer2008fully}. 

Novel formulations and efficient computational approaches have been proposed to mathematically formulate specifications such as trajectory sequencing, synchronization etc. 
Temporal logics such as linear temporal logic (LTL), computational tree logic (CTL), developed for model checking, have been widely accepted by the robotics community for the purpose of motion planning 
\cite{fainekos2009temporal}, \cite{ulusoy2012robust}. Development of sophisticated model checking tools such as SPIN 
and NuSMV 
made it easier to synthesize controllers for such systems. As an alternative approach controller synthesis has been done using mixed integer linear programming \cite{bemporad1999control}. 

Another challenge for the large connected CPS is that computation of the control law requires continuous sensing (often times distributed) and transmitting the sensed signals to the controllers. Consequently, the performance of such systems is generally determined by the availability of sensing power, bandwidth for continuous transmission and resources for fast computation. Therefore it will be beneficial if  the same (with little tolerance) performance can be achieved with lesser intensive sensing and computing tasks.

To circumvent the problem of limited communication bandwidth or computing resources or sensing capability, researchers have developed techniques that require intermittent communications only at certain discrete time instances to perform the same task with minor performance degradation. These control methodologies are known in many forms e.g.  event-triggered, self-triggered  or periodic control \cite{heemels2012introduction}, \cite{bian2005general}. These control strategies do not require the state information $x(t)$ for all time $t$, rather they sample  $x(t)$ intermittently depending on the systems' performance criterion \cite{maity2015event}, \cite{maity2015cdc}. These  techniques have proven to be efficient for large scale inter-connected systems to reduce communication and sensing operations.

In this work, we study the temporal logic based controller synthesis problem in an event-triggered framework. We consider a controller synthesis problem for a given control affine nonlinear system and the objective is to design an event triggered controller for that system with logical constraints. We assume the logical constraints can be represented using temporal logic and its propositional calculus. 

Existing literature results show that the trajectory of an event triggered system deviates from the nominal system as a consequence of limited communication \cite{maity2015event}, \cite{maity2015cdc}. Although, the continuous feedback system satisfies the logical constraints, now with an event triggered controller we have no guarantee that the logical constraint over the event-triggered trajectory will be satisfied as well. 

We show that suitably modifying the given logical constraints, and creating  stricter constraints will make the event-triggered trajectory satisfy the original logical constraint provided we can synthesize a continuous controller to satisfy the stricter constraint. The stricter logical constraint is often times known as  robust logical constraint \cite{fainekos2009robustness} since any  perturbed trajectory (within some $\epsilon$ bound) will still satisfy the constraint. We adopt this notion of robustness in this work. In the next stage, we design an event-triggered controller ensuring that the event-triggered controller confines the trajectory within an $\epsilon$-tube around the actual trajectory. Further, we show the effects of delay (in transmitting the measurement to the controller) on the performance. The analysis shows that if the delay is bounded by a certain quantity, which depends on the physical parameters of the plant and the controller, then the delayed system will be able to perform similar to the delay-free system without further modification in design.

In Section \ref{S:2}, we formally describe the problem and our two-step approach towards the problem. Section \ref{S:3} provides preliminary background on the temporal logic and construction of $\epsilon$-robust logic formulae. We design an event triggered controller for this problem in Section \ref{S:4} and study the effects of delay on such an event triggered controller. Finally, we illustrate the application of our framework using two examples in Section \ref{S:5}. 

\section{Problem formulation} \label{S:2}
Let us consider the input-affine nonlinear state space model as given in (\ref{Eqn::Dynamics}). \!
\begin{align} \label{Eqn::Dynamics}
&\dot x =f_0(t,x)+\sum\limits_{i=1}^m f_i(t,x)\cdot u_i \\
\vspace{-2pt}
&x(t_0)=x_0. \nonumber
\end{align}
where $x(t) \in \mathcal{X} \subseteq \mathbb{R}^n$, $u_i(t) \in \mathbb{R}$ is the $i$-th control input. The trajectory of the dynamical system starting at $t_0$ under application of some control $u=[u^1,\cdots,u^m]$ is denoted as $x^{u,x_0}[t_0]$. Similarly the trajectory starting from any arbitrary point $(t,x)$ is represented as $x^{u,x}[t]$. The objective of this work is to design an event-triggered controller $u(\cdot)$ that ensures satisfiability of a temporal logical constraints ($\varphi$). 

By $x^{u,x_0}[t_0]\models \varphi$ we denote that the trajectory of the dynamics (\ref{Eqn::Dynamics}) under input $u$ satisfies the logical constraint $\varphi$. Similarly, $x^{u,x_0}[t_0]\not\models \varphi$ denotes that the trajectory does not satisfy the logical constraint. In this work we focus on the real time linear temporal logics \cite{fainekos2009temporal}, \cite{reynolds2001continuous} which have been proven to be very effective for expressing logical constraints in dynamical systems \cite{bemporad1999control}. 
 In the following, we formally pose the problem that we aim to solve in this work.
 
\begin{prob} \label{P}
Given an input-affine dynamics (\ref{Eqn::Dynamics}) and a logical constraint ($\varphi$) on the trajectory of the system, design an event-triggered framework to generate the control $u(t)$ such that the event-triggered trajectory satisfies $\varphi$, i.e. 
\begin{align} \label{P:opt}
\text{find~~~~}  &u\\\vspace{-2pt}
\text{subject to ~~}    &u\in \mathcal{U}^e \nonumber \\\vspace{-2pt}
& \dot x =f_0(t,x)+\sum\limits_{i=1}^m f_i(t,x)\cdot u_i \nonumber\\\vspace{-2pt}
& x^{u,x_0}[t_0]\models \varphi \nonumber\\\nonumber\vspace{-2pt}
\end{align}
\end{prob} 
where $\mathcal{U}^e$ denotes the set of event based control strategies. 
In this work we do not impose any restriction on the event-triggered framework other than the exclusion of Zeno behavior \cite{ames2005sufficient}. We provide sufficient conditions which, if satisfied, ensure that the trajectory of the designed event-triggered system will satisfy the logical constraint $\varphi$. 

While temporal logic can express various types of logical constraints (see Section \ref{S:3}), it comes with a cost that verifying whether a trajectory satisfies the logical constraints is {\sc{Pspace}}-complete \cite{reynolds2001continuous}. Therefore synthesis of a controller is a hard problem in its own right, and synthesis of an event-based controller is harder for obvious reasons. However, there are some proposed techniques which can generate a (feedback) controller that satisfies the temporal logic constraints, see for example \cite{belta2007symbolic}, \cite{conner2006integrated}.

Therefore, we divide the original problem into two subproblems: Problem \ref{P:1} and Problem \ref{P:2}.
\begin{prob} \label{P:1}
Given the dynamics (\ref{Eqn::Dynamics}), design a feedback controller $u_i(t)=\gamma_i(t,x(t))$ such that $x^{\gamma,x_0}[t_0]\models \varphi^\epsilon$.
\end{prob} 
where $\varphi^\epsilon$ is another logical constraint  derived from $\varphi$.  $\varphi^\epsilon$ is a stricter constraint than $\varphi$ in the sense that $x^{\gamma,x_0}[t_0]\models \varphi^\epsilon$ implies $\xi[t_0]\models \varphi^\epsilon$ for all piecewise continuous curves $\xi(\cdot) : [t_0, +\infty)\rightarrow \mathbb{R}^n$ such that $\sup_{t\in [t_0, +\infty)}\|x(t)-\xi(t)\| \le \epsilon$. In the following sections we will explicitly explain how $\varphi^\epsilon$ is related to  $\varphi$ for a given $\epsilon \ge 0$.
 
\begin{prob} \label{P:2}
For all $\epsilon>0$, given the dynamics (\ref{Eqn::Dynamics}) and a feedback control $\gamma(t,x(t))$, design an event-triggered controller $ \gamma^e(t,x(\tau_k))$ such that the trajectory of the event-triggered system $x^{\gamma^e, x_0}[t_0]$ remains within an $\epsilon$ neighborhood of the ideal trajectory associated with the feedback closed-loop system. As $\epsilon \rightarrow 0$, $\gamma^e(t,\cdot)\rightarrow \gamma(t,\cdot)$ pointwise $\forall t$.
\end{prob} 

Therefore, in this two step approach, we first design a feedback controller for satisfying the $\epsilon$-strict constraint $\varphi^\epsilon$ (for some $\epsilon>0$). In the next stage we use an event-triggering mechanism which provides sufficient condition(s) for ensuring that the trajectory of the event-triggered system will be in an $\epsilon$ neighborhood of the actual feedback trajectory pointwise, i.e. $\|x_e(t)-x(t)\|\le \epsilon$ ($x_e$ is the event-triggered trajectory and $x$ is the ideal feedback trajectory) for all $t$. In Figure \ref{F:schematic}, we present our schematic for event-triggered controller synthesis using the proposed two-step approach. From this point onward, we will suppress the control and initial state in denoting a trajectory when these are apparent from the context i.e. we will represent $x^{u,x_0}[t_0]$ as $x[t_0]$ etc. 


\begin{figure}
\begin{center}
\includegraphics[width=0.5\textwidth]{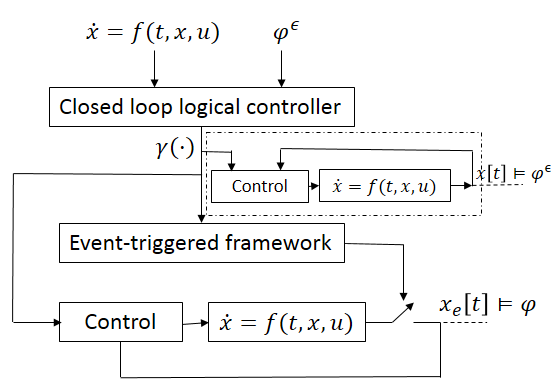}
\caption{Schematic of two-step event-triggered controller synthesis with logical constraints} \label{F:schematic}
\end{center}
\end{figure}

 \section{Propositional Temporal Logic} \label{S:3}
Like other families of propositional logic, temporal logic over the reals also requires a set of propositional variables $\Pi=\{\pi_1,\pi_2,\cdots,\pi_n\}$. Associated with each propositional variable $\pi_i$, there is a labelling function $\mathcal{L}_{i}: \mathcal{X}\rightarrow \{0,1\}$ which denotes whether the proposition $\pi_i$  is true at some point in $\mathcal{X}$. Therefore, $\Pi$  divides $\mathcal{X}$ into subsets and assigns $\pi_i$ with each of the subsets. A formula of a propositional logic is defined over a Boolean signal and in our case, $\mathcal{L}_i(\cdot)$ maps the $\mathbb{R}^n$ valued signal ($x(t)$) to a Boolean signal. For example, $\pi_1$ could be associated with the ball of radius $1$ at the origin of $\mathcal{X}=\mathbb{R}^2$. Then $\mathcal{L}_1(s)=1$ for all $x \in B_0(1)$ and 0 otherwise, where  $ B_x(\delta)=\{y\in \mathcal{X}~|~ \|y-x\|_2\le \delta\}$ is a ball of radius $\delta$ centered at $x$.  Note that it is not necessary that the regions associated with $\pi_i$ are non-overlapping. We will use the notation $\pi_i \cong \mathcal X_i (\subseteq \mathcal{X})$ to denote $\mathcal{L}_i(x)=1$ for all $x\in \mathcal{X}_i$ and $\mathcal{L}_i(x)=0$ for all $x\in \mathcal{X}\setminus \mathcal{X}_i$ (basically the indicator function of the set $\mathcal{X}_i$). At this point, it should be noted that any algebraic constraint on $x$ of the form $G(x) \le 0 $ could be associated with a proposition $\pi$ such that $\pi \cong \{x \in \mathcal{X}~|~ G(x) \le 0\}$. 

However, the power of temporal logic is beyond capturing these algebraic constraints. Let us first provide an informal overview of the capability of the logic and then formally state the syntax and semantics  of the logic. The RTL (Temporal Logic over Reals) \cite{reynolds2010complexity} formulae are built on the propositional variables $\Pi$ with the use of usual logical operators $\neg$ (negation), $\vee$ (conjunction) and $\wedge$ (disjunction), and some special temporal operators e.g. $\textbf{U}$ (until), $\Diamond$ (eventually), $\Box$ (always) and other operators that could be derived from the mentioned operators. For example, the formula $\Diamond \Box \pi$ (read as ``Eventually Always in $\pi$") where $\pi \cong \mathcal{X}_i$, when satisfied by a trajectory of the dynamics (\ref{Eqn::Dynamics}), means that eventually the trajectory enter the region $\mathcal{X}_i$ and stay there for all future times. Similarly, $ (\neg \pi_1 \wedge \neg \pi_2 \wedge \neg \pi_3)\textbf{U} \pi_4 $ states the rule that region $\mathcal{X}_4 $ must be reached   while avoiding regions $\mathcal{X}_i$ for $i=1,2,3$ ($\pi_j \cong \mathcal{X}_j$). Although the satisfaction of the formula tells us that the state trajectory will reach $\mathcal{X}_4$,  it does not provide any interval of time within which it will reach the destination. This limitation can easily be circumvented by the traditional augmentation of a new state $x_{n+1}=t$.

Adding an extra equation $\dot x_{n+1}=1$ with $x_{n+1}(t_0)=t_0$ in the dynamics (\ref{Eqn::Dynamics}), we can pose time dependent constraints as well. In this case, the augmented space is $\mathcal{X}\times [t_0,T)$ ($T$ could be $+\infty$ for an infinite horizon problem). The formula $\Box (\pi_1 \vee  \pi_2)$ where $\pi_1 \cong \{(x,t)~|~ x\in \mathcal{X}, t_0\le t < 6 \}$ and $\pi_2 \cong \{(x,t)~|~ x\in \mathcal{X}_i, 5\le t \}$ requires the trajectory to enter the region $\mathcal{X}_2$ no earlier than $t=5$ and the trajectory should remain  within $\mathcal{X}_2$ for all $t \in [6,\infty)$. The formula also mentions that the trajectory will be in $\mathcal{X}_1$ when it is not in $\mathcal{X}_2$. Therefore, with this state-space augmentation, all the properties related to the timing aspect of a trajectory of the system (\ref{Eqn::Dynamics}) can be expressed.

\begin{df} \label{def1}
 \textit{The syntax of RTL formulas are defined according to the following grammar rules:}
 \begin{center}
 $\phi ::= \top ~| ~\pi~ |~\neg \phi~ | ~\phi \vee \phi ~|~\phi \mathbf{U} \phi ~|~\phi \mathbf{R} \phi ~ $
 \end{center} 
 \end{df}
 where $\pi \in \Pi$,
$\top$ and $\bot(=\neg\top)$ are the Boolean constants {\tt{true}} and {\tt{false}} respectively.  $\mathbf{R}$ symbolizes the  $Release$ operator.
Other temporal logic operators  can be represented using the grammar in definition \ref{def1} e.g. eventually ($\Diamond \varphi=\top \textbf{U}\varphi $), always ($\Box\varphi = \neg\Diamond (\neg \varphi)$) etc. 

If $x[t_0]$ denotes a trajectory starting at time $t_0$, the semantics of the grammar in Definition \ref{def1} is given as follows:
\begin{df}\label{ltlsym}
 \textit{The semantics of any formula $\phi$ over the trajectory $x[t_0]$ is recursively defined as:\\
 $x[t_0] \models \pi$ iff $\mathcal{L}_\pi(x(t_0))=1$\\
 $x[t_0] \models \neg \pi$ iff $\mathcal{L}_\pi(x(t_0))=0$\\
 $x[t_0] \models \phi_1\vee \phi_2$ iff $x[t_0] \models \phi_1$ or $x[t_0] \models \phi_2$\\
 $x[t_0] \models \phi_1\wedge \phi_2$ iff $x[t_0] \models \phi_1$ and $x[t_0] \models \phi_2$\\
 $x[t_0] \models \phi_1\mathbf{U} \phi_2$ iff $\exists s\geq t_0$ s.t. $x[s] \models  \phi_2$ and 
 $\forall$ $t_0 \leq s'<s, ~ x[s'] \models \phi_1$.\\
 $x[t_0] \models \phi_1\mathbf{R} \phi_2$ iff $\forall s\geq t_0$  $x[s] \models  \phi_2$ or $\exists s'$ s.t. $t_0 \leq s'<s, ~ x[s'] \models \phi_1$.} 
\end{df}

More details on RTL grammar and semantics can be found in \cite{reynolds2001continuous}, \cite{fainekos2009temporal}.

\subsection{Construction of $\epsilon$-Robust Formula}

As described in Section \ref{S:2}, the motivation behind constructing an $\epsilon$-robust formula $\varphi^\epsilon$ is that any trajectory satisfying the stricter formula $\varphi^\epsilon$ is robust in the sense that any perturbed trajectory with less than $\epsilon$ perturbation will also satisfy the original constraint $\varphi$. 

The idea is as follows: if the trajectory needs to visit a region $\pi_i(\cong \mathcal{X}_i)$, then we push the boundary of $\mathcal{X}_i$ inwards by amount $\epsilon$ and denote this new set (and proposition) by $\mathcal{X}_i^\epsilon$ ($\pi_i^\epsilon$). Similarly if the trajectory needs to avoid some region $\pi_j (\cong \mathcal{X}_j)$ then the boundary of $\mathcal{X}_j$ is expanded outwards by an amount $\epsilon$. 

The RTL syntax presented in Definition (\ref{def1}) is in negative normal form (NNF) \cite{clarke1999model}, and this enables us to detect which regions must be avoided by noting the presence of the negation ($\neg$) operator immediately before the corresponding propositions $\pi_i$. 

Note that, by our definition $\pi_i \cong \mathcal{X}_i$ and $\neg \pi_i \cong \mathcal{X}\setminus \mathcal{X}_i$.

\begin{df}
\textit{
In a given metric space ($\mathcal{X},\rho$) the open ball centered at $x \in \mathcal{X}$ of radius $r$ is defined as $B_x(r)=\{y \in \mathcal{X}~|~ \rho(x,y)<r\}$. Let $\epsilon>0$ be a given parameter, then the $\epsilon$-contraction of the set $\mathcal{Y}\subseteq \mathcal{X}$ is denoted by $\mathcal{Y}^\epsilon=\{y\in \mathcal{Y}~|~ B_y(\epsilon)\subseteq \mathcal{Y}\}$.\\
Similarly, the $\epsilon$-expansion of the set is denoted by $\mathcal{Y}^{-\epsilon}=\{x\in \mathcal{X}~|~ \exists y\in \mathcal{Y}, x \in B_y(\epsilon)\}$.
}
\end{df}

For a RTL formula $\varphi$, the $\epsilon$-robust formula is constructed as follows:\\
1) replace each $\pi_i (\cong \mathcal{X}_i)$, which is not preceded by any negation, by $\pi_i^\epsilon$ where $\pi_i^\epsilon \cong \mathcal{X}_i^\epsilon$.\\
2) any $\pi_j$ that is preceded by a negation ($\neg$) should be replaced by $\pi_j^{-\epsilon} \cong \mathcal{X}_j^{-\epsilon}\cup (\mathcal X\setminus \mathcal{X}^\epsilon)$. When $\mathcal{X}=\mathcal{X}^\epsilon$ for all finite $\epsilon >0$ (e.g. $\mathcal{X}=\mathbb{R}^n$),  $\pi_j^{-\epsilon} \cong \mathcal{X}_j^{-\epsilon}$.

In a similar way one can define $\epsilon$-robust formulas over the space $\mathcal{X}\times [t_0,T)$. However, for this work we will restrict ourselves to the robustness only in $\mathcal{X}$ space.

\begin{pr} \label{Pr:robust}
Let the 
trajectory $x[t_0]$ satisfies the RTL formula $\varphi^\epsilon$ for some $\epsilon>0$. Then for all $\delta \le \epsilon$ and for any curve $y(\cdot):[t_0,T)\rightarrow \mathcal{X}$ such that $\sup_t \rho(x(t),y(t)) \le \delta$, $y[t_0]\models \varphi$.
\end{pr}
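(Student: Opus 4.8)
The plan is to prove the statement by structural induction on $\varphi$. Because the temporal operators $\mathbf{U}$ and $\mathbf{R}$ evaluate their sub-formulas at instants other than $t_0$, keeping the evaluation point fixed at $t_0$ would not let the induction close; instead I would establish the stronger claim that for every sub-formula $\psi$ of $\varphi$ and every $s\in[t_0,T)$, $x[s]\models\psi^\epsilon$ implies $y[s]\models\psi$, where $\psi^\epsilon$ is the $\epsilon$-robust formula built from $\psi$ by the two substitution rules of Section \ref{S:3}. Here $x$, $y$, $\delta\le\epsilon$ are fixed with $\sup_t\rho(x(t),y(t))\le\delta$. Since $\varphi$ is in NNF, negations occur only directly in front of atomic propositions, so the map $\psi\mapsto\psi^\epsilon$ commutes with $\vee$, $\wedge$, $\mathbf{U}$, $\mathbf{R}$; this compositionality is exactly what makes the induction feasible.

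For the base cases I would argue directly from the contraction/expansion definitions. If $\psi=\pi_i$ with $\pi_i\cong\mathcal X_i$, then $\psi^\epsilon=\pi_i^\epsilon\cong\mathcal X_i^\epsilon$, and $x[s]\models\pi_i^\epsilon$ means $B_{x(s)}(\epsilon)\subseteq\mathcal X_i$; since $\rho(x(s),y(s))\le\delta\le\epsilon$ we obtain $y(s)\in\mathcal X_i$, i.e. $y[s]\models\pi_i$. If $\psi=\neg\pi_j$, then $\psi^\epsilon=\neg\pi_j^{-\epsilon}$ with $\pi_j^{-\epsilon}\cong\mathcal X_j^{-\epsilon}$ (or $\mathcal X_j^{-\epsilon}\cup(\mathcal X\setminus\mathcal X^\epsilon)$ in the general case), and $x[s]\models\neg\pi_j^{-\epsilon}$ means $x(s)\notin\mathcal X_j^{-\epsilon}$, i.e. $\rho(x(s),z)\ge\epsilon$ for every $z\in\mathcal X_j$; this forces $y(s)\notin\mathcal X_j$, i.e. $y[s]\models\neg\pi_j$, and the correction term is absorbed using $y(s)\in\mathcal X$.

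The inductive step is then routine once the claim is strengthened over all $s$. For $\psi_1\vee\psi_2$, one disjunct holds for $x$ at $s$ and the induction hypothesis transfers it to $y$; conjunction is similar (and stays in NNF). For $\psi_1\mathbf{U}\psi_2$, I would take the witness time $\sigma\ge s$ supplied by $x[s]\models\psi_1^\epsilon\mathbf{U}\psi_2^\epsilon$, apply the hypothesis at $\sigma$ to get $y[\sigma]\models\psi_2$ and at every $s'\in[s,\sigma)$ to get $y[s']\models\psi_1$, hence $y[s]\models\psi_1\mathbf{U}\psi_2$; the $\mathbf{R}$ case is dual. Setting $s=t_0$ and $\psi=\varphi$ yields $y[t_0]\models\varphi$.

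The one delicate point — essentially the only step that is more than bookkeeping — is the endpoint $\delta=\epsilon$: the contraction and expansion are defined with \emph{open} balls, whereas the perturbation bound $\sup_t\rho(x(t),y(t))\le\delta$ is a closed inequality, so the containment $y(s)\in B_{x(s)}(\epsilon)$ and the separation $\rho(x(s),z)\ge\epsilon$ each want a strict inequality on one side. This is resolved either by reading the bound strictly when $\delta=\epsilon$, or by taking closed balls in the definitions of $\mathcal Y^\epsilon$ and $\mathcal Y^{-\epsilon}$; with either convention every base case closes, and the remainder of the argument is dictated by the semantics in Definition \ref{ltlsym} together with the NNF compositionality of $\psi\mapsto\psi^\epsilon$.
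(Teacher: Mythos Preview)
Your proposal is correct and follows precisely the approach the paper itself indicates: the paper does not spell out a proof but simply states that ``the above proposition can be proved inductively'' and refers the reader to \cite{fainekos2009temporal}, which is exactly the structural induction on the NNF formula that you carry out. Your identification of the open-ball versus closed-inequality boundary issue at $\delta=\epsilon$ is a genuine subtlety that the paper (and, arguably, the cited reference) glosses over.
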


The above proposition can be proved inductively; interested readers may see \cite{fainekos2009temporal} for a proof.

\begin{rem}
In order to construct a  $\pi_i^\epsilon$ from $\pi_i$, $\mathcal{X}_i$ must have a non-empty interior. 
\end{rem}

Let us define the radius of a set $\mathcal{X}_i$ in the following way $r(\mathcal{X}_i)=\sup\{r~|~\exists x\in \mathcal{X}_i, B_x(r)\subseteq \mathcal{X}_i \}$.
Therefore for each $\pi_i$, $\pi_i^\epsilon$ is well defined for  $\epsilon\le r(\mathcal{X}_i)$, for $\epsilon > r(\mathcal{X}_i)$, $\pi_i^\epsilon \cong \emptyset$. 
For the subsequent section we will implicitly assume that  $\min_i\{r(\mathcal{X}_i)\}>0$  and moreover, each of these sets, $\mathcal{X}_i$ is a polyhedron.

Therefore, to solve problem \ref{P:1}, our goal will be to design a controller that satisfies the robust RTL formula $\varphi^\epsilon$. The power of using these temporal logic formulae is that each formula can be represented by an equivalent (B\"{u}chi) automaton. Satisfaction of an RTL formula is equivalent to finding a path from the initial state to one of the accepting states of the automaton. The construction of such automata can be done automatically using the available tools SPIN and NuSMV \cite{holzmann1997model}, \cite{cimatti2002nusmv}. The dynamics (\ref{Eqn::Dynamics}) can be represented as a finite transition system (FTS), where the transitions are performed by selecting the control $u$. Thus, the dynamics constraint and logical constraint in (\ref{P:opt}) can jointly be represented by forming a product of the automata and the FTS. More detail on such construction and controller synthesis can be found in our earlier work \cite{maity2015motion}. To keep the paper short, we spare the details here, however, interested reader may see, for example, \cite{lindemann2005smoothly}, \cite{conner2006integrated}. 

\section{Event Triggered controller synthesis} \label{S:4}
In the previous section it is presented how the control inputs can be generated for the dynamics (\ref{Eqn::Dynamics}) so that the trajectory of the system satisfies the logical constraint $\varphi^\epsilon$ for some $\epsilon>0$. This section will focus on designing an event-triggered controller  that will replace the feedback controller designed to satisfy $\varphi^\epsilon$  in such a way that the trajectory of the event-triggered system will remain within $\epsilon$ distance of the ideal feedback trajectory.

Let us denote the controller $u(t)=\gamma(x(t))$ that achieves the satisfaction of  $\varphi^\epsilon$. The closed-loop dynamics are:
\begin{align} \label{Eqn::Dyn2}
&\dot x =f_0(t,x) +\sum_{i=1}^m f_i(t,x)\gamma_i(x)\\
&x(t_0)=x_0 \nonumber
\end{align} 
Let $x[t_0]$ denote the trajectory of the above closed loop system. 
We make the following assumptions on the system (\ref{Eqn::Dyn2}).
\begin{asms} \label{AS:1}
\textbf{(A1)} $\gamma_i(\cdot)$ and for all $t$, $f_i(t,\cdot)$  are Lipschitz functions with Lipschitz constants $L_{\gamma}^i$,
$L_f^i$ respectively, for all $i=0,1,2,\cdots, m$. 

\textbf{(A2)} 
 For all $i=1,2\cdots,m$ and $\forall t$, $f_i(t,x)\gamma_i(x)$ and $f_0(t,x)$ are continuously differentiable functions w.r.t $x$ with continuous first derivative.
\end{asms}


In event-triggered framework, the controller is designed to be:
\begin{equation}
\gamma_i^e(t)= \gamma_i(x(t_k)) ~~~~\forall t \in [t_k,t_{k+1})
\end{equation}
where $t_k$s are the event-triggering times. An event-generator needs to be designed that will produce the $t_k$ in certain way that is explained in the following. 

Let us denote the event-triggered closed loop system as $x_e(t)$ and the corresponding trajectory as $x_e[t_0]$. Thus,
\begin{align} \label{E:edyn}
&\dot x_e= f_0(t,x_e)+\sum_{i=1}^mf_i(t,x_e)\gamma^e_i(x_e(t_k))\\
&x_e(t_0)=x_0.
\end{align}
Let us define the error $e(t)=x(t)-x_e(t)$.
Note that the event-triggered controller $\gamma_i^e(t,\cdot)$ for all $t$ is an approximation of the ideal feedback controller $\gamma_i(t,\cdot)$ by piecewise constant functions. Therefore, designing  the $\gamma_i$ at first makes the problem tractable for generating the event-triggered controller. 

The dynamics of $e(t)$ is given as:
\begin{align}
&\dot e =F(t,x)-F(t,x_e)+\sum_{i=0}^mf_i(t,x_e)(\gamma^e_i(x_e(t_k))-\gamma_i(x_e(t))) \nonumber \\
& e(t_0)=0,
\end{align}
where $F(t,x)=f_0(t,x) +\sum_{i=1}^m f_i(t,x)\gamma_i(x)$
\begin{asm} \label{AS:exp}
$y(t)$ is exponentially stable with
\begin{align}
&\dot y= A(t)y
\end{align}
where $A(t)=\frac{\partial F(t,x)}{\partial x}\Big|_{x=x(t)}$ and  $x(t)$ is the trajectory of (\ref{Eqn::Dyn2}).  
\end{asm}
We can write, 
\begin{align}
\dot e= A(t)e+\tilde f(t,x_e,e)e+\delta(t)
\end{align}
where $\tilde{f}(t,x_e,e)e=F(t,x)-F(t,x_e)-A(t)e$ and $\delta(t)=\sum_{i=0}^mf_i(t,x_e)(\gamma^e_i(x_e(t_k))-\gamma_i(x_e(t)))$

Using Assumption \ref{AS:1} (A2),
$$F(t,x+h)=F(t,x)+\int_{s=0}^1d{F(t,x+sh)}dsh$$ where $dF(t,\cdot): \mathbb{R}^n\rightarrow \mathbb{R}^n$ is a linear map which is the derivative of  the map $F(t,\cdot): \mathbb{R}^n\rightarrow\mathbb{R}^n$. Selecting $h=x_e-x=-e$, we obtain:
$$F(t,x_e)=F(t,x)-\int_0^1d{F(t,x-se)}dse.$$
Therefore,
$$\tilde{f}(t,x_e,e)=\int_{s=0}^1d{F(t,x_e(t)+(1-s)e(t))}ds-A(t).$$ 
Also $\tilde f(t,x_e,0)=0$.

Since $\dot y=A(t)y$ is the linearization of the system  $\dot e=A(t)e+\tilde{f}(t,x_e,e)e$ around $e=0$, we can say that $\dot e=A(t)e+\tilde{f}(t,x_e,e)e$ is locally exponentially stable due to Assumption \ref{AS:exp}. As a consequence of the Lyapunov converse theorem \cite[ Theorem 4.14]{khalil1996nonlinear}, we have a (local) quadratic Lyapunov function that satisfies:
\begin{align}
c_1\|e\|^2 \le V(t,e) &\le c_2\|e\|^2\\
\frac{\partial V}{\partial t} +\frac{\partial V}{\partial e}(A(t)+\tilde{f}(t,x_e,e))e&\le -c_3\|e\|^2\\
\Big\|\frac{\partial V}{\partial e}\Big\| &\le c_4\|e\|
\end{align}

\begin{pr} \label{pr:e}
For all $t$,
\begin{align} \label{E:boundedE}
\|e(t)\|_2\le \frac{c_4}{2c_1}\int\limits_{t_0}^te^{-(t-s)c_3/2c_2}\|\delta(s)\|_2ds
\end{align}
\end{pr}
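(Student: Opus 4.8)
The plan is to use the Lyapunov function $V(t,e)$ whose existence was guaranteed by the converse theorem, and to estimate its decay along the trajectory of $\dot e = A(t)e + \tilde f(t,x_e,e)e + \delta(t)$. First I would differentiate $V$ along this trajectory, writing
\[
\dot V = \frac{\partial V}{\partial t} + \frac{\partial V}{\partial e}\big(A(t)e + \tilde f(t,x_e,e)e\big) + \frac{\partial V}{\partial e}\,\delta(t).
\]
The first group of terms is bounded above by $-c_3\|e\|^2$ using the second Lyapunov inequality, and the last term is bounded by $\|\partial V/\partial e\|\,\|\delta(t)\| \le c_4\|e\|\,\|\delta(t)\|$ using the third inequality. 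Hence $\dot V \le -c_3\|e\|^2 + c_4\|e\|\,\|\delta(t)\|$.

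Next I would convert this differential inequality on $V$ into one on $\|e\|$ by using the sandwich bound $c_1\|e\|^2 \le V \le c_2\|e\|^2$. Writing $W = \sqrt{V}$, one has $\dot W = \dot V/(2\sqrt V)$, and using $\dot V \le -c_3\|e\|^2 + c_4\|e\|\,\|\delta\|$ together with $\|e\|^2 \ge V/c_2$ and $\|e\| \le \sqrt{V}/\sqrt{c_1} = W/\sqrt{c_1}$ yields
\[
\dot W \le -\frac{c_3}{2c_2}W + \frac{c_4}{2\sqrt{c_1}}\,\|\delta(t)\|.
\]
This is a linear scalar differential inequality in $W$, so by the comparison lemma (Grönwall-type),
\[
W(t) \le e^{-(t-t_0)c_3/(2c_2)}W(t_0) + \frac{c_4}{2\sqrt{c_1}}\int_{t_0}^t e^{-(t-s)c_3/(2c_2)}\|\delta(s)\|\,ds.
\]
Since $e(t_0)=0$ we have $W(t_0)=0$, and finally $\|e(t)\| \le W(t)/\sqrt{c_1}$ converts this back to a bound on $\|e\|$, giving exactly \eqref{E:boundedE}.

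The main obstacle is making the intermediate estimate on $\dot V$ rigorous: the converse-theorem Lyapunov function is only \emph{local} (valid in a neighborhood of $e=0$), so strictly speaking one must first argue that $\|e(t)\|$ stays inside that neighborhood for all $t$ — which is really a bootstrapping argument that uses the very bound being proved, provided $\|\delta\|$ is small enough. I would handle this either by assuming implicitly (as the surrounding development seems to) that the event-triggering rule keeps $\delta$ small so the trajectory never leaves the validity region, or by a standard continuation/maximal-interval argument. A second minor technical point is the non-smoothness of $W=\sqrt V$ at points where $e=0$; since $e(t_0)=0$, one should either work with $V$ directly and divide at the end, or note that $W$ is locally Lipschitz and apply the comparison lemma in the Dini-derivative sense. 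Neither of these is deep, but both should be flagged for the estimate to be clean.
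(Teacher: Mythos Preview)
Your argument is correct and is precisely the standard Lyapunov/comparison-lemma derivation that underlies the BIBO-type bound the paper invokes; the paper itself does not spell out these steps but simply refers to \cite[Theorem~2.6]{maity2015cdc} and the BIBO stability of an exponentially stable system. Your flagged caveats about the local validity of the converse Lyapunov function and the Dini-derivative treatment of $W=\sqrt{V}$ at $e=0$ are appropriate and do not affect the result.
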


\begin{proof}
A detailed proof of this can be found in our earlier work \cite[Theorem 2.6]{maity2015cdc}. The proposition is due to the BIBO (bounded input bounded output) stability of an exponential stable system.
\end{proof}

Note that at each $t_k$, $\delta(t_k)=0$. We can bound $\|\delta(t)\|$ to ensure a bound on $\|e(t)\|$.

\begin{rem}
 Without computing $x(t)$ real-time, $e(t)$ can be bounded by observing the signal $\delta(t)$ which depends only on $x_e$. Moreover, due to the Lipschitz assumptions on $f_i(t,\cdot)$ and $\gamma_i(\cdot)$, it is sufficient to only monitor the difference signal $x(t)-x(t_k)$.
\end{rem}
From (\ref{E:boundedE}), we have the sufficiency condition that $\sup_t\|\delta(t)\|\le\epsilon_1= \frac{c_1c_3}{c_2c_4}\epsilon$ ensures $\sup_t\|e(t)\|\le \epsilon$.

We propose the following event-trigger function:
\begin{align} \label{E:event}
g(t)= \frac{c_1c_3}{c_2c_4}\epsilon-\|\delta(t)\|.
\end{align}
An event is generated whenever $g(t)\le 0$ and the state value at that time ($x(t_k)$) is sent to the controller. The set of triggering times is denoted by $\mathcal T =\{t_1,t_2,\cdots ,t_k,\cdots\}$ such that $g(t_k)=0$ and $g(t)<0$ otherwise.

There could be other event-triggering functions that can also ensure bounded error $e(t)$. In this paper, we consider (\ref{E:event}) to carry out the analysis further and to perform the simulations.

\begin{lm} \label{l:del}
For all $t$, 
\begin{align} \label{E;dq}
\|\delta(t)\| \le \alpha\|x_e(t)-x_e(t_k)\|^2+\beta(t)\|x_e(t)-x_e(t_k)\|. 
\end{align}
for some $\alpha,\beta(t)>0$.  $t_k$ is the latest triggering time at time $t$. 
\end{lm}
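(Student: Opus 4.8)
The plan is to bound $\|\delta(t)\|$ by expanding its definition and exploiting the Lipschitz structure of all the ingredients, together with the fact that $\delta$ vanishes at each triggering instant $t_k$. Recall $\delta(t)=\sum_{i=0}^m f_i(t,x_e(t))\bigl(\gamma_i^e(x_e(t_k))-\gamma_i(x_e(t))\bigr)$, where we set $\gamma_0\equiv 1$ so the $i=0$ term is $f_0(t,x_e(t))-f_0(t,x_e(t))=0$; more carefully, the genuine contribution comes from $i=1,\dots,m$, where $\gamma_i^e(x_e(t_k))=\gamma_i(x_e(t_k))$ since the event-triggered controller is just the sampled feedback. So $\delta(t)=\sum_{i=1}^m f_i(t,x_e(t))\bigl(\gamma_i(x_e(t_k))-\gamma_i(x_e(t))\bigr)$.

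The first step is the triangle inequality over the sum, giving $\|\delta(t)\|\le\sum_{i=1}^m \|f_i(t,x_e(t))\|\,\|\gamma_i(x_e(t_k))-\gamma_i(x_e(t))\|$. For the second factor, apply the Lipschitz bound (A1) on $\gamma_i$: $\|\gamma_i(x_e(t_k))-\gamma_i(x_e(t))\|\le L_\gamma^i\|x_e(t)-x_e(t_k)\|$. For the first factor, I would write $\|f_i(t,x_e(t))\|\le \|f_i(t,x_e(t))-f_i(t,x_e(t_k))\|+\|f_i(t,x_e(t_k))\|\le L_f^i\|x_e(t)-x_e(t_k)\|+\|f_i(t,x_e(t_k))\|$, again using (A1). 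Multiplying these two factors and summing over $i$ produces a term quadratic in $\|x_e(t)-x_e(t_k)\|$ with coefficient $\alpha=\sum_{i=1}^m L_f^i L_\gamma^i$ (a constant), and a term linear in $\|x_e(t)-x_e(t_k)\|$ with coefficient $\beta(t)=\sum_{i=1}^m L_\gamma^i\,\|f_i(t,x_e(t_k))\|$. Since $t_k$ is the latest triggering time at $t$, this $\beta(t)$ is piecewise defined through the sampled state but is a well-defined positive function of $t$, which matches the claim.

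The only subtlety worth flagging is the treatment of the $i=0$ term and the precise meaning of $\gamma^e_i$: one must note that the drift $f_0$ appears with no control coefficient, so either it contributes nothing to $\delta$ or the sum should genuinely start at $i=1$; I would state this explicitly so the bound is clean. A second minor point is ensuring $\beta(t)>0$ strictly — this holds as long as at least one $f_i(t,x_e(t_k))\neq 0$, and otherwise the linear term simply drops and the inequality holds trivially with any positive $\beta(t)$. I do not anticipate a real obstacle here; the estimate is a direct consequence of the Lipschitz assumptions in (A1) and the add-and-subtract trick, so the "hard part" is merely bookkeeping the constants and being careful that $\alpha$ comes out state-independent while the unavoidable state-dependence is absorbed into $\beta(t)$.
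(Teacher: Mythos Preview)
Your proposal is correct and follows essentially the same argument as the paper: both rely on the add-and-subtract of $f_i(t,x_e(t_k))$ together with the Lipschitz bounds in (A1), and both arrive at the identical constants $\alpha=\sum_{i=1}^m L_f^i L_\gamma^i$ and $\beta(t)=\sum_{i=1}^m L_\gamma^i\|f_i(t,x_e(t_k))\|$. The only cosmetic difference is that the paper splits $\delta(t)$ into two sums before taking norms, whereas you apply the triangle inequality first and then bound each factor; your explicit remark on the $i=0$ term is a welcome clarification of a minor inconsistency in the paper's own definition of $\delta$.
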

\begin{proof}
We have $\delta(t)=\sum_{i=1}^mf_i(t,x_e)(\gamma^e_i(x_e(t_k))-\gamma_i(x_e(t)))$.
By rearranging,
\begin{align*}
&\delta(t)=\sum_{i=1}^mf_i(t,x_e(t_k))(\gamma^e_i(x_e(t_k))-\gamma_i(x_e(t)))+\\
&\sum_{i=1}^m(f_i(t,x_e(t))-f_i(t,x_e(t_k)))(\gamma^e_i(x_e(t_k))-\gamma_i(x_e(t)))
\end{align*}
Using the Lipschitz continuity assumption in Assumption \ref{AS:1}, we can write:
\begin{align}
\|\delta\|\le &\|(x_e(t_k))-(x_e(t))\|\sum_{i=1}^mL^i_\gamma\|f_i(t,x_e(t_k))\|+\nonumber \\
&\|(x_e(t_k))-(x_e(t))\|^2\sum_{i=1}^mL^i_\gamma L^i_f.
\end{align}
Now we define:

$\alpha=\sum_{i=1}^mL^i_\gamma L^i_f$ and $\beta(t)=\sum_{i=1}^mL^i_\gamma\|f_i(t,x_e(t_k))\|$.
\end{proof}

In \cite{maity2015cdc} a different bound on $\delta$ was derived. There it was shown that: \vspace{-2 mm}
\begin{align} \label{E:dl}
\|\delta(t)\| \le \kappa(t) \|(x_e(t_k))-(x_e(t))\|.
\end{align}
$\kappa(t)=\max_i\{L^i_\gamma\}\sup_{x\in \Omega} \sum_{i=1}^m \|f_i(t,x)\|$. Where the trajectory $x(t)$ of (\ref{Eqn::Dyn2}) is bounded in the domain $\Omega^\epsilon$.

Comparing (\ref{E:dl}) with (\ref{E;dq}), we notice that the former is bounded linearly w.r.t. $\|(x_e(t_k))-(x_e(t))\|$ whereas the later is bounded by a quadratic form of $\|(x_e(t_k))-(x_e(t))\|$. In most of the practical applications $\epsilon\ll 1$ and hence $\|(x_e(t_k))-(x_e(t))\|$ is required to keep smaller than $\epsilon$ (see Proposition \ref{suff}). Therefore, $\|(x_e(t_k))-(x_e(t))\|^2$ can be bounded by $\|(x_e(t_k))-(x_e(t))\|$ with proper coefficient. Furthermore, the presence of $\sup$ operator over the whole domain $\Omega^\epsilon$ in (\ref{E:dl}) implies that  $\kappa(t) \ge \beta(t)$ (in general $\kappa(t) \gg \beta(t)$). Therefore, (\ref{E;dq}) could be a better approximation of $\|\delta(t)\|$.

\begin{pr}[sufficiency] \label{suff}
For all $t$, 
\begin{align*}
\|x_e(t)-x_e(t_k)\|\le \frac{\epsilon_1}{\epsilon_1 +\frac{\beta^2(t)}{4\alpha}}\frac{\beta(t)}{4\alpha}
\end{align*}
 ensures $\|e(t)\|\le \epsilon$, for $\epsilon_1=\frac{c_2c_4}{c_1c_3}\epsilon$. 
\end{pr}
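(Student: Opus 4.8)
The plan is to reduce the statement to a one-variable inequality in $z(t):=\|x_e(t)-x_e(t_k)\|$ and then solve a quadratic. Recall from \eqref{E:boundedE} (Proposition \ref{pr:e}) that $\sup_t\|\delta(t)\|\le\epsilon_1$ already forces $\sup_t\|e(t)\|\le\epsilon$, so it suffices to exhibit a bound on $z(t)$ under which Lemma \ref{l:del} guarantees $\|\delta(t)\|\le\epsilon_1$. By Lemma \ref{l:del}, $\|\delta(t)\|\le\alpha z(t)^2+\beta(t)z(t)$, so the whole task is to identify the set of $z\ge 0$ for which $\alpha z^2+\beta(t)z\le\epsilon_1$ and to check that the bound claimed in the proposition lies inside it.

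First I would study the scalar quadratic $q(z):=\alpha z^2+\beta(t)z-\epsilon_1$. Since $\alpha>0$ and $\beta(t)>0$, $q$ is strictly increasing on $[0,\infty)$ with $q(0)=-\epsilon_1<0$, so it has a unique nonnegative root $r^\ast(t)=\dfrac{-\beta(t)+\sqrt{\beta(t)^2+4\alpha\epsilon_1}}{2\alpha}$, and $\alpha z^2+\beta(t)z\le\epsilon_1$ holds precisely for $0\le z\le r^\ast(t)$. Rationalizing the numerator rewrites the threshold as $r^\ast(t)=\dfrac{2\epsilon_1}{\beta(t)+\sqrt{\beta(t)^2+4\alpha\epsilon_1}}$. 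It then remains only to verify that the stated bound, which simplifies to $\dfrac{\epsilon_1\beta(t)}{\beta(t)^2+4\alpha\epsilon_1}$, does not exceed $r^\ast(t)$. I would do this with the elementary estimate $\sqrt{\beta(t)^2+4\alpha\epsilon_1}\le\beta(t)+\dfrac{2\alpha\epsilon_1}{\beta(t)}$ (square the right-hand side, or use concavity of $\sqrt{\,\cdot\,}$), which gives $\beta(t)+\sqrt{\beta(t)^2+4\alpha\epsilon_1}\le\dfrac{2(\beta(t)^2+\alpha\epsilon_1)}{\beta(t)}$ and hence $r^\ast(t)\ge\dfrac{\epsilon_1\beta(t)}{\beta(t)^2+\alpha\epsilon_1}\ge\dfrac{\epsilon_1\beta(t)}{\beta(t)^2+4\alpha\epsilon_1}$. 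Chaining the inequalities: if $z(t)$ is below the stated bound then $z(t)\le r^\ast(t)$, so $\alpha z(t)^2+\beta(t)z(t)\le\epsilon_1$, so $\|\delta(t)\|\le\epsilon_1$, so $\|e(t)\|\le\epsilon$; since nothing beyond the pointwise value $\beta(t)$ enters, this holds for every $t$.

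I do not expect a genuine obstacle here. The only subtlety is that the displayed bound is a deliberate loosening of the exact threshold $r^\ast(t)$ --- presumably chosen so that the triggering test involves $\beta(t)$ in a simple rational form --- so the argument must run from the exact root toward the conservative bound and not the reverse, and one must keep the direction of the square-root estimate straight. A minor bookkeeping point is the precise value of $\epsilon_1$ (the factor $c_1c_3/c_2c_4$ and its reciprocal both appear in the surrounding text), but the argument above only uses that $\|\delta\|\le\epsilon_1$ implies $\|e\|\le\epsilon$ and is otherwise indifferent to which constant is intended.
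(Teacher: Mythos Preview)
Your proof is correct and follows essentially the same route as the paper: reduce to $\|\delta\|\le\epsilon_1$ via Proposition~\ref{pr:e}, solve the quadratic $\alpha z^2+\beta(t)z=\epsilon_1$ for the exact threshold $r^\ast(t)$, and then check that the stated bound lies below $r^\ast(t)$. The only cosmetic difference is the elementary estimate in that last step---the paper lower-bounds $\sqrt{\beta^2+4\alpha\epsilon_1}$ directly using $\ln(1+x)\ge x/(1+x)$, while you rationalize $r^\ast$ and upper-bound the square root via $\sqrt{a+b}\le\sqrt{a}+b/(2\sqrt{a})$, which is arguably cleaner and even produces the slightly sharper intermediate bound $\epsilon_1\beta/(\beta^2+\alpha\epsilon_1)$ before you relax to the stated one.
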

\begin{proof}
From (\ref{E;dq}), $\alpha\|x_e(t)-x_e(t_k)\|^2+\beta(t)\|x_e(t)-x_e(t_k)\| \le \epsilon_1$ implies $\|\delta\|\le \epsilon_1$.

Therefore, 
$$\|x_e(t)-x_e(t_k)\|\le\frac{\sqrt{\beta(t)^2+4\alpha\epsilon_1}-\beta(t)}{2\alpha}$$
implies $\|\delta\|\le \epsilon_1$. 
Using the fact $$\ln(1+x)\ge \frac{x}{1+x}$$ for all $x\ge 0$, one can verify
\begin{align*}
\sqrt{\beta(t)^2+4\alpha\epsilon_1}\ge \beta(t)+\frac{2\alpha\beta(t)\epsilon_1}{(\beta(t)^2+4\alpha\epsilon_1)}.
\end{align*}
Thus, $$\|x_e(t)-x_e(t_k)\|\le\frac{\beta(t)\epsilon_1}{(\beta(t)^2+4\alpha\epsilon_1)}$$
ensures $\|\delta\|\le \epsilon_1=\frac{c_2c_4}{c_1c_3}\epsilon$.

Proposition \ref{pr:e} ensures that $\|\delta\|\le \epsilon_1=\frac{c_2c_4}{c_1c_3}\epsilon$ implies $\|e(t)\|\le \epsilon$.
\end{proof}

As a comparison, in \cite{maity2015cdc}, the sufficient condition equivalent to Proposition \ref{suff} was $\|x_e(t)-x_e(t_k)\|\le \frac{\epsilon_1}{\kappa(t)}$. If $$\epsilon_1< \frac{\beta(\kappa-\beta)}{4\alpha}$$ the sufficiency condition in Proposition \ref{suff} is relaxed than its counterpart in \cite{maity2015cdc}.

 The following lemma ensures that the proposed event-triggering mechanism excludes Zeno behavior.

\begin{lm}
The inter-trigger time $\tau_k=t_k-t_{k-1}$  is bounded from below, i.e. $\inf_k \tau_k \ge \alpha >0$. This ensures that within a finite interval $[t_0,T)$ there will be a finite number of triggerings.
\end{lm}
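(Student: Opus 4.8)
The goal is to show that consecutive triggering times are separated by a uniform positive amount. The plan is to track how fast the quantity $\|x_e(t) - x_e(t_k)\|$ can grow between triggers, starting from zero at $t_k$, and to show it cannot reach the threshold that forces a new event until at least a fixed time has elapsed. First I would recall from the event definition that an event at $t_{k+1}$ requires $g(t_{k+1}) = 0$, i.e. $\|\delta(t_{k+1})\| = \epsilon_1$; combined with Lemma~\ref{l:del} this forces $\alpha\|x_e(t_{k+1}) - x_e(t_k)\|^2 + \beta(t_{k+1})\|x_e(t_{k+1}) - x_e(t_k)\| \ge \epsilon_1$, hence $\|x_e(t_{k+1}) - x_e(t_k)\|$ is bounded below by some strictly positive constant $d_{\min}$ (explicitly the positive root $\tfrac{-\beta + \sqrt{\beta^2 + 4\alpha\epsilon_1}}{2\alpha}$, with $\beta$ replaced by its supremum over the relevant compact domain so that $d_{\min}$ is uniform in $k$).

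Next I would bound the growth rate of $x_e$. On $[t_k, t_{k+1})$ the event-triggered dynamics read $\dot x_e = f_0(t,x_e) + \sum_{i=1}^m f_i(t,x_e)\gamma_i(x_e(t_k))$, and since the trajectory stays in the compact domain $\Omega^\epsilon$ (within $\epsilon$ of the ideal trajectory $x(t)$, which is itself bounded), the right-hand side is bounded in norm by some constant $M$ uniformly in $t$ and $k$ — this uses continuity of the $f_i$, boundedness of $\gamma_i$ on the compact set, and compactness of the relevant time interval if finite (or uniform bounds if $T=\infty$). Then $\|x_e(t) - x_e(t_k)\| \le M\,(t - t_k)$ for $t \in [t_k, t_{k+1})$. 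Setting $t = t_{k+1}$ and combining with the lower bound from the previous step, $M\,\tau_{k+1} \ge d_{\min}$, so $\tau_{k+1} \ge d_{\min}/M =: \alpha > 0$, independent of $k$. The finiteness of the number of triggerings in $[t_0, T)$ then follows immediately, since more than $(T-t_0)/\alpha$ events is impossible.

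The main obstacle I expect is making the uniform bound $M$ on $\|\dot x_e\|$ legitimate: this requires knowing a priori that $x_e(t) \in \Omega^\epsilon$ for all $t$, which is exactly what Proposition~\ref{suff} guarantees only if the event mechanism keeps $\|e(t)\| \le \epsilon$ — and that in turn is what we are trying to make rigorous via the no-Zeno property. This is a mild circularity that must be handled carefully, e.g. by a standard continuation argument: on any maximal interval where $\|e(t)\| \le \epsilon$ holds, $x_e$ stays in the compact set, $M$ is valid, the inter-event times are bounded below, so only finitely many events occur and $\|e\| \le \epsilon$ is in fact maintained, allowing the interval to be extended. A secondary technical point is ensuring $\beta(t)$ is uniformly bounded above (so that $d_{\min}$ is uniform), which again follows from $x_e(t_k)$ lying in the compact domain and continuity of the $f_i$. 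I would also note the minor inconsistency in the statement that the same symbol $\alpha$ is used both for the constant in Lemma~\ref{l:del} and for the lower bound on $\tau_k$; in the write-up I would rename the latter (say $\tau_{\min}$) to avoid confusion.
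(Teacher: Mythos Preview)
The paper does not actually supply a proof of this lemma: it merely writes ``The lemma can be proved following the approach of \cite[Theorem 3.1]{maity2015cdc}; we omit it due to space limitation.'' So there is no in-paper argument to compare against directly.

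That said, your plan is sound and is the standard route for establishing a positive minimum inter-event time in this setting. The two steps --- (i) extracting from the triggering condition $\|\delta(t_{k+1})\|=\epsilon_1$ together with Lemma~\ref{l:del} a uniform lower bound $d_{\min}>0$ on $\|x_e(t_{k+1})-x_e(t_k)\|$, and (ii) bounding $\|\dot x_e\|\le M$ on the compact tube $\Omega^\epsilon$ to obtain $\|x_e(t)-x_e(t_k)\|\le M(t-t_k)$ --- combine exactly as you indicate to give $\tau_{k+1}\ge d_{\min}/M$. Your handling of the monotonicity in $\beta$ (taking the supremum of $\beta$ yields the smallest positive root, hence a uniform lower bound) is correct. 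The paper's own delay analysis in the subsequent subsection uses a closely related Gronwall-type estimate $\|x_e(t)-x_e(t_{k-1})\|\le \bigl(\int_{t_{k-1}}^t\|p(s)\|\,ds\bigr)e^{l(t-t_{k-1})}$, which strongly suggests that the omitted proof in the cited reference proceeds along the same lines; your cruder linear bound $M(t-t_k)$ is perfectly adequate here since only a qualitative positive lower bound is needed.

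Your flagging of the circularity (the bound $M$ relies on $x_e\in\Omega^\epsilon$, which in turn relies on the mechanism working) is apt and is genuinely glossed over in the paper; the continuation argument you sketch is the right fix. Your remark about the symbol clash on $\alpha$ is also correct --- the paper overloads $\alpha$ for both the quadratic coefficient in Lemma~\ref{l:del} and the inter-event lower bound in the lemma statement, and renaming the latter (e.g.\ $\tau_{\min}$) in a write-up is advisable.
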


The lemma can be proved following the approach of \cite[Theorem 3.1]{maity2015cdc}; we omit it due to space limitation.
 
\begin{thm}[\textbf{Main Result}]
If there exists $\epsilon>0$ and controllers $\gamma_i(\cdot)$ such that the closed-loop trajectory $x[t_0]\models \varphi^\epsilon$, then the event triggered trajectory $x_e[t_0] \models \varphi$ where events are generated whenever $g(t)\le 0$.
\end{thm}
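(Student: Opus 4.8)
The plan is to chain together three facts already in hand: the robustness of the contracted formula $\varphi^\epsilon$ (Proposition~\ref{Pr:robust}), the BIBO-type bound of Proposition~\ref{pr:e}, and the construction of the triggering function $g$ in (\ref{E:event}). Concretely, I would show that the event-triggered trajectory $x_e(\cdot)$ stays inside the $\epsilon$-tube of the ideal closed-loop trajectory $x(\cdot)$, so that $x_e$ qualifies as an admissible perturbed curve in Proposition~\ref{Pr:robust}, and then apply that proposition at perturbation level $\epsilon$ to conclude $x_e[t_0]\models\varphi$.

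The first and main step is to verify that the triggering rule forces $\|\delta(t)\|\le\epsilon_1:=\tfrac{c_1c_3}{c_2c_4}\epsilon$ for \emph{all} $t\ge t_0$, not merely at the event times. This holds because $\delta(\cdot)$ is continuous on each inter-event interval $[t_k,t_{k+1})$, it is reset to zero at every $t_k$ (since $\gamma_i^e(x_e(t_k))=\gamma_i(x_e(t_k))$ makes the defining sum vanish, as already noted after Proposition~\ref{pr:e}), and by definition of $\mathcal T$ the next event fires at the first instant $g$ reaches $0$, i.e.\ the first instant $\|\delta\|$ attains $\epsilon_1$. The preceding lemma on inter-trigger times (Zeno-freeness) guarantees only finitely many events on any bounded horizon, so $x_e(\cdot):[t_0,T)\to\mathcal X$ is a well-defined, piecewise continuous curve, which is precisely the regularity needed to invoke Proposition~\ref{Pr:robust}.

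With the uniform bound $\sup_t\|\delta(t)\|\le\epsilon_1$ in place, I would substitute it into the integral estimate (\ref{E:boundedE}) of Proposition~\ref{pr:e}; bounding the convolution $\int_{t_0}^t e^{-(t-s)c_3/2c_2}\,ds$ by $\tfrac{2c_2}{c_3}$ and using $\epsilon_1=\tfrac{c_1c_3}{c_2c_4}\epsilon$ yields $\|e(t)\|_2=\|x(t)-x_e(t)\|_2\le\epsilon$ for all $t$, i.e.\ $\sup_t\rho(x(t),x_e(t))\le\epsilon$. (In an implementation one need not monitor $\delta$ directly: Proposition~\ref{suff} converts $g(t)\ge 0$ into a condition on the readily available quantity $\|x_e(t)-x_e(t_k)\|$, but this is not needed for the argument.) Since $x[t_0]\models\varphi^\epsilon$ by hypothesis, Proposition~\ref{Pr:robust} applied with the perturbation bound $\epsilon$ then gives $x_e[t_0]\models\varphi$, which is the assertion.

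The convolution estimate and the final appeal to robustness are routine; the step that carries the actual content is the first one — arguing that the ``reset-to-zero at $t_k$ plus first-hitting-time of $g=0$'' structure of the triggering mechanism propagates the instantaneous equality $\delta(t_k)=0$ into the uniform bound $\|\delta(t)\|\le\epsilon_1$, and that Zeno-freeness makes the resulting $x_e(\cdot)$ a legitimate perturbed curve. Everything after that is substitution into previously proven statements.
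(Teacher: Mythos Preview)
Your proposal is correct and follows essentially the same route as the paper: the paper's proof is the one-line remark that Proposition~\ref{Pr:robust} applies with $y[t_0]=x_e[t_0]$ because the triggering mechanism has already been shown to enforce $\sup_t\|e(t)\|\le\epsilon$. You have simply unpacked that ``already shown'' into its constituents---the reset $\delta(t_k)=0$, the first-hitting definition of $t_{k+1}$ giving $\|\delta(t)\|\le\epsilon_1$ on each inter-event interval, the Zeno-freeness lemma to make $x_e(\cdot)$ well defined, and the convolution estimate from Proposition~\ref{pr:e}---which is exactly the chain the paper assembles in the paragraphs between (\ref{E:boundedE}) and the theorem.
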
 
 
The proof follows directly from Proposition (\ref{Pr:robust}) where $y[t_0]=x_e[t_0]$ and we have ensured $\sup_t\rho(x(t),x_e(t))=\sup_t\|e(t)\| \le \epsilon$. 
 
 \subsection{Implication of Delays}
 
 In this section we study the scenario when the sampled state $x(t_k)$ arrives to the controller at time $t_k+\Delta_k$ where $\Delta_k$ is the delay in the channel at time $t_k$. The aim of this section is to find a bound on the the delays so that the proposed event-triggered strategy still ensures that $\|e(t)\|\le \epsilon$ for all time $t$.
 
 In order to study that we start with the sufficiency condition $\|x_e(t)-x_e(t_k)\| \le h(\epsilon)$ which ensures $\|e(t)\|\le \epsilon$. Here $h(\epsilon)$ is $\frac{\beta(t)\epsilon_1}{(\beta(t)^2+4\alpha\epsilon_1)}$ (or $\frac{\epsilon_1}{\kappa(t)}$ by \cite{maity2015cdc}).
 
Let the delays at triggering times  $t_{k-1}$ and $t_k$ be $\Delta_{k-1}$, $\Delta_k$. Thus for all $t\in [t_{k-1}+\Delta_{k-1},t_k+\Delta_k)$ the requirement is
$$\|x_e(t)-x_e(t_{k-1})\| \le h(\epsilon)$$
Now,
\begin{align*}
\|x_e(t)-&x_e(t_{k-1})\| \le \\ &\int_{t_{k-1}}^t\|f_0(s,x)+\sum_{i=1}^mf_i(s,x)\gamma_i^e(x(t_{k-1}))\|ds \\
\le  &\int_{t_{k-1}}^t(l\|x_e(s)-x_e(t_{k-1})\|+\|p(s)\|)ds \\
\end{align*}
where $l=L^0_f+\sum_{i=1}^mL^i_f\gamma^e_i(x(t_{k-1}))$ and $p(s)=f_0(s,x(t_{k-1}))+\sum_{i=1}^mf_i(s,x(t_{k-1}))\gamma_i^e(x(t_{k-1}))$.

Therefore,
\begin{align*}
\|x_e(t)-&x_e(t_{k-1})\| \le \left(\int_{t_{k-1}}^t\|p(s)\|ds\right)e^{l(t-t_{k-1})}
\end{align*}

Let $T_k=\inf_t\{t>t_{k-1}| \left(\int_{t_{k-1}}^t\|p(s)\|ds\right)e^{l(t-t_{k-1})}=h(\epsilon)\}$. 

Therefore we must have $T_k\ge t_k+\Delta_k$ and $t_k\ge t_{k-1}+\Delta_{k-1}$.
Thus, $$\Delta_k+\Delta_{k-1}\le T_k-t_{k-1}$$
From the definition of $T_k$, 
\begin{align}
 \left(\int_{0}^{T_k-t_{k-1}}\|p(s+t_{k-1})\|ds\right)e^{l(T_k-t_{k-1})}=h(\epsilon)=\bar \epsilon
\end{align}
It is trivial to verify the above equation has unique solution for $T_k-t_{k-1}$ whenever $\bar{\epsilon}\ge 0$, and let us denote this solution by $T_k-t_{k-1}=\tilde w(\bar\epsilon)$ for some function $\tilde{w}$ which satisfies the differential equation:
\begin{align} \label{E:w}
\frac{d\tilde w(r)}{dr}&=\frac{1}{rl+e^{l\tilde w}\|p(\tilde{w}+ t_{k-1})\|} \\ \tilde{w}(0)&=0 \nonumber.
\end{align}
When $f_i$ does not depend explicitly on time, then $p(s)=p$ and for this special case, 
\begin{align}
\tilde{w}(\bar \epsilon)=\frac 1lW(l\bar{\epsilon}/p) 
\end{align}
where $W(\cdot)$ is the Lambert $W$ function. 

From (\ref{E:w})  one can verify that for all $r>0$, $\tilde{w}(r)>0$. Moreover using comparison lemma \cite[Lemma 3.4]{khalil1996nonlinear}, one can show for (\ref{E:w}) that 
$$\tilde{w}(r)\ge \frac 1lW(lr/p_m)$$ where $p_m=\sup \|p(\tilde{w}+t_{k-1})\|$.
Using the concavity property of $W(\cdot)$ along with $W(0)=0$,  for all $0 \le r\le r_m$
$$\tilde{w}(r)\ge \frac{W(lr_m/p_m)}{lr_m} r$$. 

Thus, for $\bar{\epsilon}_m\ge\bar{\epsilon}=h(\epsilon)$, 
\begin{align} \label{E:bd}
\sup_k\{\Delta_k+\Delta_{k-1}\}\le \frac{W(l\bar{\epsilon}_m/p_m)}{l\bar{\epsilon}_m} h({\epsilon})
\end{align}
ensures that $\|e(t)\|\le\epsilon$. Thus, (\ref{E:bd}) states the sufficient condition for delays under which the proposed event trigger mechanism will be able to  ensure $\|e(t)\|\le\epsilon$. 

\section{Examples and Simulations} \label{S:5}
\subsection{Example 1}
 Let us consider the following nonlinear system: 
  \begin{equation}
{\begin{bmatrix}
\dot x_1\\\dot x_2
\end{bmatrix}}=-\begin{bmatrix}
\sin(x_1)\\x_2
\end{bmatrix}+\begin{bmatrix}
-x_2 \\x_1
\end{bmatrix}u
\end{equation}
 where $x_1(0)=0, x_2(0)=1$.
We consider the region $\pi_1 \cong \mathcal{X}_1=B_0(0.1)\subset \mathbb{R}^2$. 
The constraint (requirement) is to guide the state of the system within $B_0(0.1)$  and keep the trajectory within that ball for all future times.
The logical constraint is represented by  $\Diamond\Box \pi_1$.  To proceed, we consider the $\epsilon$-robust formula $\pi_1^\epsilon$ with  $\epsilon=0.05$. Therefore, $\mathcal{X}_1^\epsilon=B_0(0.05)$.

We chose the controller $u=-x_2$ that achieves the property that $x[0]\models \pi_1^\epsilon$ (use the Lyapunov function $V=x_1^2+x_2^2$ to verify).  

At this stage we need to design an event triggering mechanism that will ensure that the event triggered system will follow the actual trajectory $x[t_0]$. The event triggered system dynamics is given by:
\begin{align}
{\begin{bmatrix}
\dot x_1\\\dot x_2
\end{bmatrix}}=-\begin{bmatrix}
\sin(x_1)\\x_2
\end{bmatrix}-\begin{bmatrix}
-x_2 \\x_1
\end{bmatrix}x_2(t_k)
\end{align}
 The initial condition is given as $(0,1)$.
One can check that $\bar M_i, L^i_\gamma$ defined in Proposition \ref{suff} have value $1$. In Figure \ref{F:e11}, the error signal is shown  along with the triggering instances. From Figure \ref{F:e11} we note that only  $9$ samples are needed to achieve the task. This requires drastically reduced communication and sensing when compared to the continuous time feedback system. The trajectories of the ideal and even-triggered systems, and the $\epsilon$-tube are shown in Figure \ref{F:e11_2}.

\begin{center}
\begin{figure}
\includegraphics[width=0.5 \textwidth]{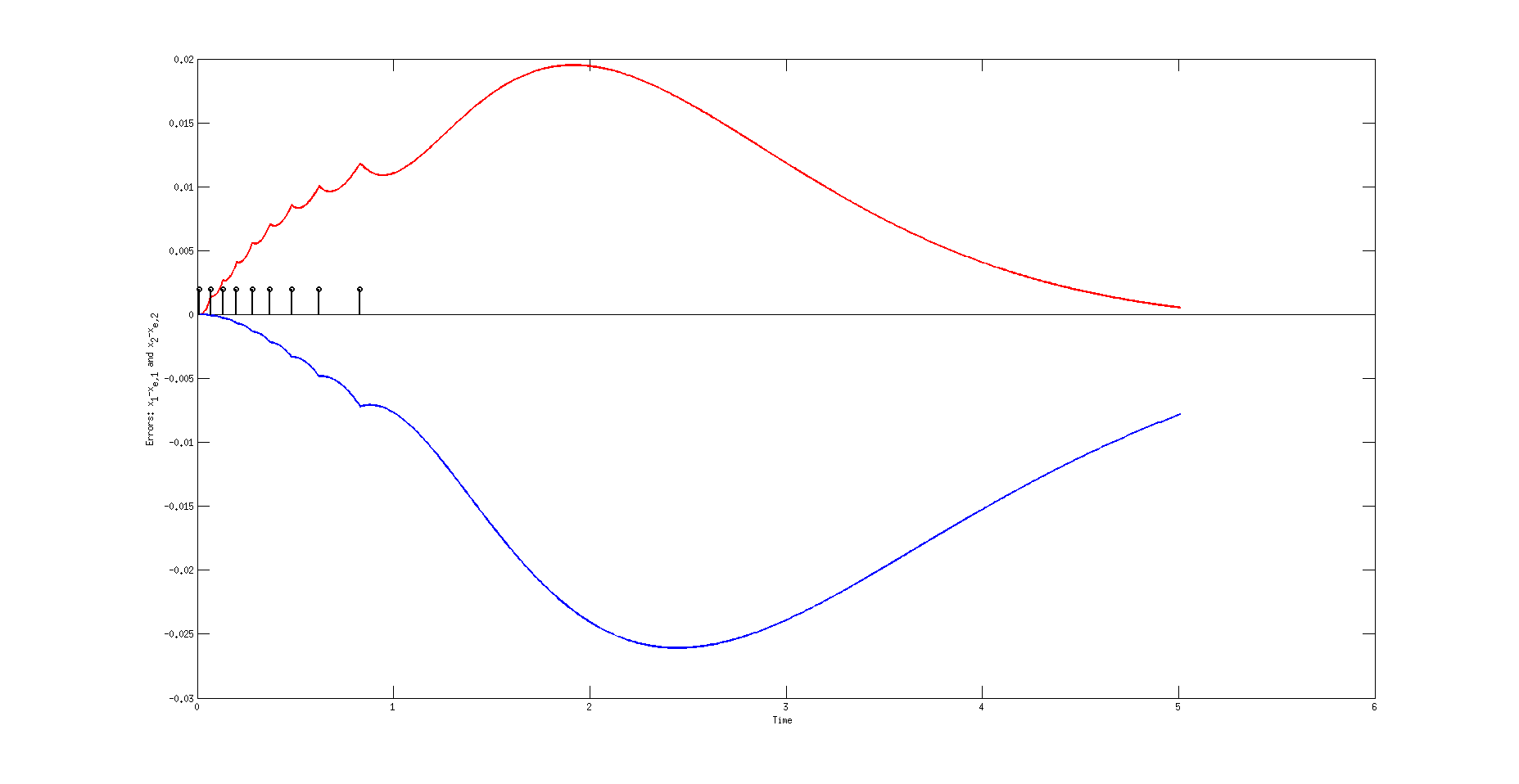} 
\caption{The red curve corresponds to the first component of the error $e=x_e-x$ and the blue one corresponds to the second component. The plot also shows the triggering instances. At each triggering instance, we notice corrective changes in the error components. Eventually the error components go to zero but is not shown here.}\label{F:e11}
\end{figure}
\end{center}

\begin{figure}
\begin{center}
\includegraphics[height = 7 cm]{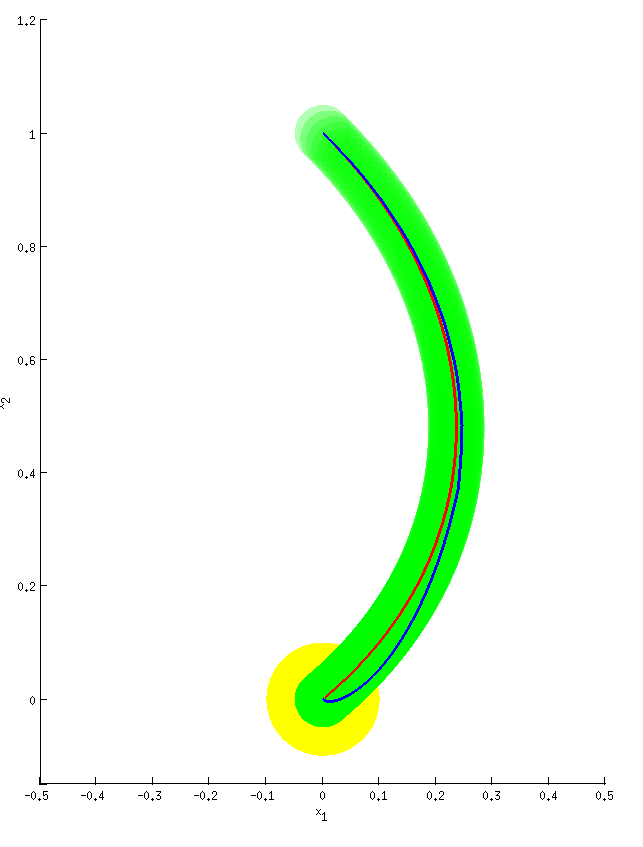}
\caption{The red trajectory corresponds to continuous feedback control $x[0]$ and the blue trajectory corresponds to event triggered control. The green tube has a radius $0.05$ and it shows that the proposed event triggered control ensures the trajectory is within that tube. The yellow circle corresponds to the given rule that the trajectory must be confined in there eventually.}\label{F:e11_2}
\end{center}
\end{figure}

\subsection{Example 2}
In this example we consider a robotic motion planning task with temporal sequencing and obstacle avoidance. The robot dynamics considered here is a unicycle model as:
\begin{equation}
\begin{bmatrix}
\dot x \\ \dot y \\ \dot \theta
\end{bmatrix} = \begin{bmatrix}
\cos\theta \\ \sin\theta \\0
\end{bmatrix}v +
\begin{bmatrix}
0\\0\\1
\end{bmatrix}w
\end{equation}
where $x,y\in \mathbb{R}^2$ is the physical position  and $\theta \in [0,360^o)$ is the heading angle. 
The task is given as follows:
\begin{equation}
\varphi \models \Diamond \pi_2 \wedge (\neg \pi_2 \textbf{U} \pi_1) \wedge \Box \neg \pi_3
\vspace{-5pt}
\end{equation}
where $\pi_1,\pi_2$ and $\pi_3$ corresponds to three circular regions as shown (denoted by $R_1$, $R_2$ and $R_3$) in Figure \ref{F:ex2}. The RTL formula defines the task of avoiding $R_2$ until reaching $R_1$ and eventually reaching $R_2$, and during the whole time the trajectory should avoid $R_3$.
 We adopt a potential function based approach \cite{rimon1992exact} to generate the control laws for navigating the robot. 
\begin{figure}
\begin{center}
\includegraphics[width=0.45\textwidth]{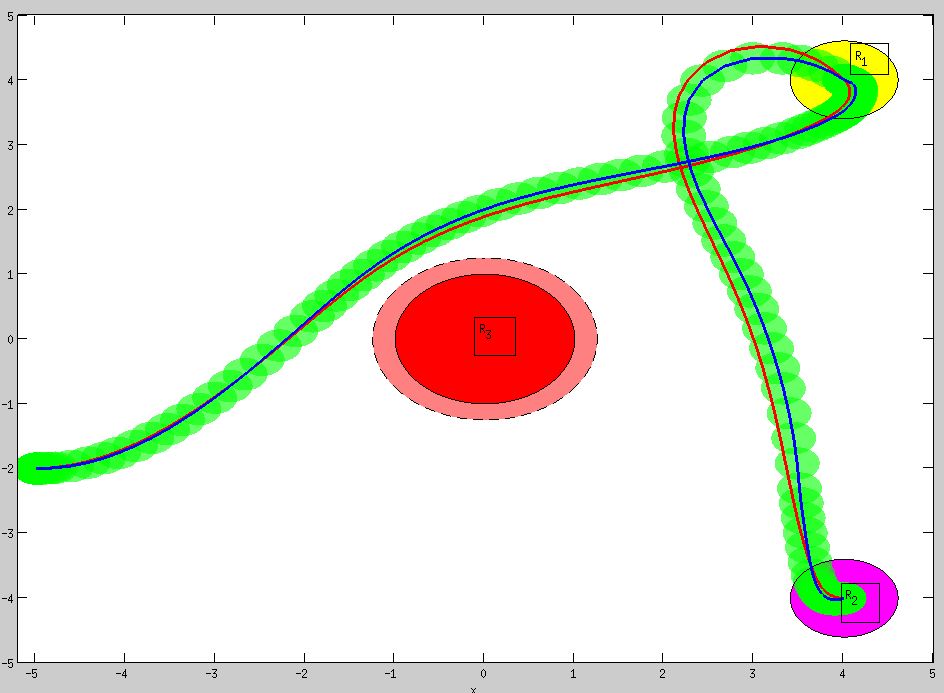}
\caption{The closed loop trajectory is plotted using blue line and the event triggered trajectory with red line. The green tube around the nominal trajectory has radius $0.25$.  The initial position and orientation of the robot is $(-5,-2,0)$} \label{F:ex2}
\end{center}
\end{figure}
As presented in previous sections, we expand and contract the appropriate regions while synthesizing the control.
The region $R_3$ has been expanded by $\epsilon=0.25$ (the dashed boundary around $R_3$ shows the expansion in Figure \ref{F:ex2}) while $R_1$ was contracted and $R_2$ has been both expanded and contracted (since both $\pi_2$ and $\neg \pi_2$ are present), however, we do not explicitly show them in Figure  \ref{F:ex2}. 
In figure \ref{F:ex2_2}, we show the triggering instances for this problem. 
\begin{figure}
\begin{center}
\includegraphics[width=0.5\textwidth]{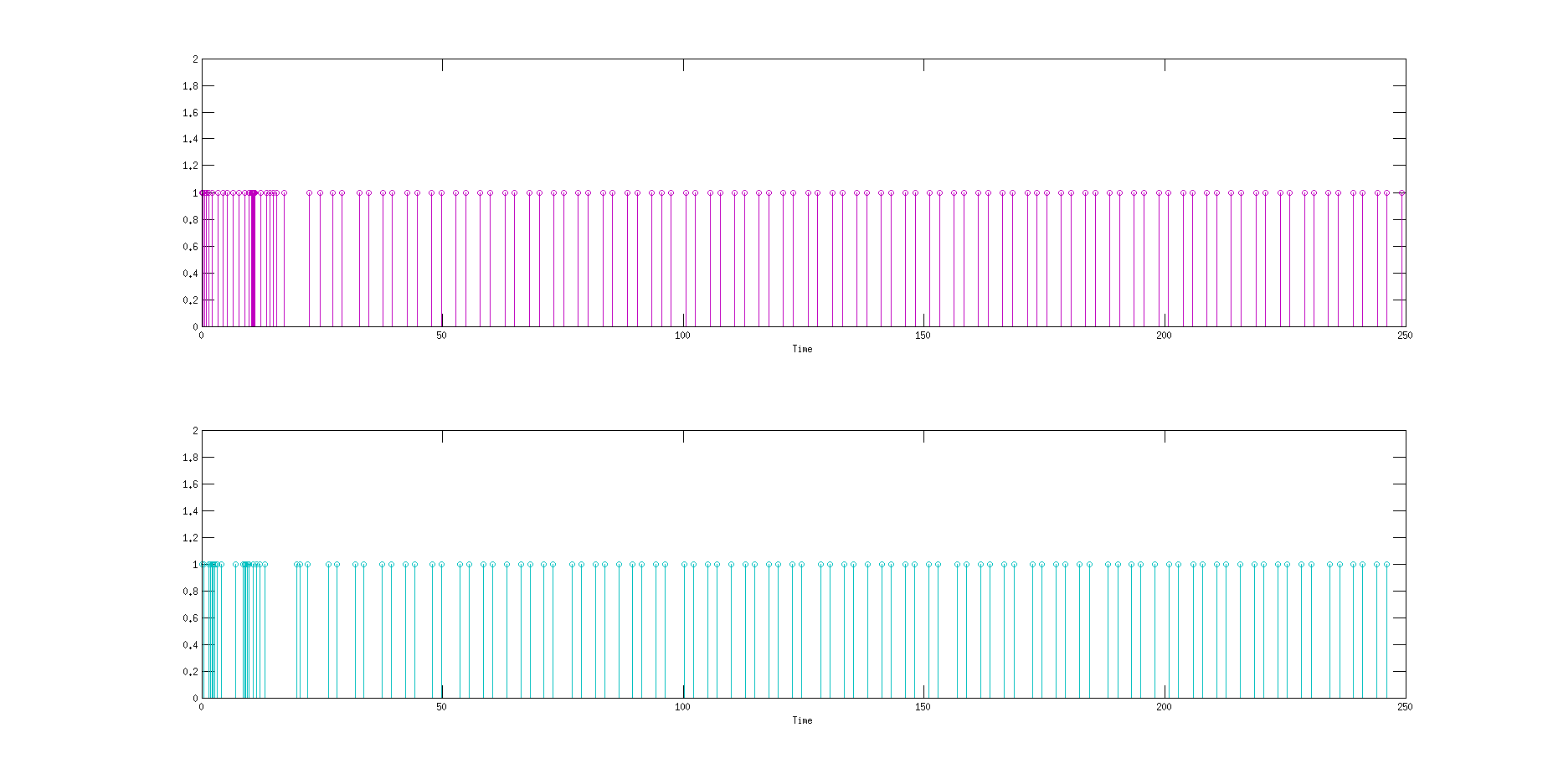}
\vspace{-15 pt}
\caption{The upper graph shows the triggering instances for the trajectory from initial position to $R_1$. The lower graph shows the same for the other segment of the trajectory.} \label{F:ex2_2}
\end{center}
\vspace{-10pt}
\end{figure}

\section{Conclusion}
In this work, we have proposed a framework for integrating the event-triggered controller synthesis and logic based controller synthesis. Our solution is based on composition of two independent controller synthesis framework.
It is also noteworthy that not any pair of logic-based-controller and event-trigger-controller has this unique composability property.
We have derived an explicit event triggering mechanism to bound the trajectory within an $\epsilon$-tube. With the notion of robust logic constraints, the resulting trajectory finally satisfies the logical constraint. 
Simulation results show the significant reduction in communicating the state value for  updating the controller. This reduces the communication and computation costs. 

%

\bibliographystyle{ieeetr}

\bibliography{Bib}

\end{document}